\DeclareMathOperator{\map}{map}
\DeclareMathOperator{\conv}{Conv}
\DeclareMathOperator{\cconv}{\mathbf{Conv}}
\DeclareMathOperator{\Z}{\mathbb{Z}}
\DeclareMathOperator{\C}{\mathbb{C}}
\DeclareMathOperator{\R}{\mathbb{R}}
\newtheorem{theorem}{Theorem}
\newtheorem{proposition}[theorem]{Proposition}
\newcommand*\bigcdot{\mathpalette\bigcdot@{.5}}
\newcommand*\bigcdot@[2]{\mathbin{\vcenter{\hbox{\scalebox{#2}{$\m@th#1\bullet$}}}}}
\title[Rotation Equivariant Deep Learning for Contours]{RotaTouille: Rotation Equivariant Deep Learning for Contours}
\author[O. H. Gardaa et al.]{%
Odin Hoff Gardaa\\
University of Bergen\\
\email{odin.garda@uib.no}\And
Nello Blaser\\
University of Bergen\\
\email{nello.blaser@uib.no}
}
\begin{document}

\maketitle

\begin{abstract}
Contours or closed planar curves are common in many domains. For example, they appear as object boundaries in computer vision, isolines in meteorology, and the orbits of rotating machinery. In many cases when learning from contour data, planar rotations of the input will result in correspondingly rotated outputs. It is therefore desirable that deep learning models be rotationally equivariant. In addition, contours are typically represented as an ordered sequence of edge points, where the choice of starting point is arbitrary. It is therefore also desirable for deep learning methods to be equivariant under cyclic shifts. We present RotaTouille, a deep learning framework for learning from contour data that achieves both rotation and cyclic shift equivariance through complex-valued circular convolution. We further introduce and characterize equivariant non-linearities, coarsening layers, and global pooling layers to obtain invariant representations for downstream tasks. Finally, we demonstrate the effectiveness of RotaTouille through experiments in shape classification, reconstruction, and contour regression.
\end{abstract}

\section{Introduction}

Equivariance and invariance are the central concepts in geometric deep learning. Designing architectures that respect certain symmetries, often defined in terms of group actions, allows us to incorporate prior geometric knowledge about the data into the learning process. This can lead to models that generalize better, require less data, and are more efficient by reducing the effective hypothesis space. Equivariance is especially useful when the task requires the model to be sensitive to transformations of the input, such as translation, rotation, or permutation, while still producing consistent and meaningful output. Invariance, on the other hand, is desirable when the output should remain unchanged under transformations. Convolutional neural networks (CNNs) illustrate both concepts in image analysis. Convolutions are translation-equivariant, so shifting an input image shifts the feature maps accordingly, which is useful for segmentation, while applying global pooling after the convolutional layers achieves invariance, allowing classification to be insensitive to the object's position. Graph neural networks (GNNs) offer another example: they are often designed to be invariant or equivariant under permutations of node orderings (graph isomorphisms). A comprehensive overview of methods and concepts in geometric deep learning can be found in \cite{bronstein2021geometric} and \cite{gerken2023geometric}. 

\begin{figure}[ht!]
  \centering
  \includesvg[width=0.8\textwidth,inkscapelatex=true]{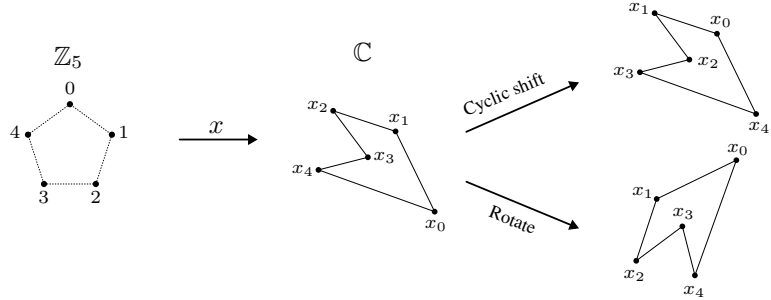}
  \caption{An illustration showing a contour $x\colon\Z_n\to\C^k$ where $n=5$ and $k=1$, and how a cyclic shift by one and a rotation by $\pi/2$ radians changes the image of $x$ in $\C$. For readability, we write $x_i=x(i)$. For $k>1$, one can think of the image of $x$ as a stack of contours, one for each copy of $\C$.}
  \label{fig:illustrated_contour_and_actions}
\end{figure}

\looseness=-1 In this paper, we focus on what we will refer to as \emph{contours}. Contours are complex-valued signals on finite cyclic groups, i.e., functions $\Z_n\to\C^k$, where $\Z_n$ denotes the cyclic group of order $n$, and $\C^k$ is the $k$-dimensional complex vector space consisting of $k$-tuples of complex numbers. See \cref{fig:illustrated_contour_and_actions} for an illustration of a simple contour. Contours can serve as sparse representations of object boundaries, but they are not limited to simple closed shapes. Contours can also accommodate more general signals, including self-crossing curves and multichannel contours. For example, contours occur naturally as cell shapes in cell morphology \cite{burgess2024orientation,bleile2023persistence,migicovsky2019}, and as orbit plots in the vibrational analysis of rotating machinery \cite{jablon2021diagnosis, jeong2016rotating, caponetto2019deep}. Our method uses complex-valued neural networks, which employ complex-valued weights and are often employed in applications such as radar imaging \cite{gao2018enhanced, scarnati2021complex}, MRI fingerprinting and reconstruction \cite{virtue2017better, cole2021analysis}, and other areas where data are naturally represented in the complex domain. See \cite{lee2022survey, bassey2021survey} for a detailed overview of complex-valued neural networks and their applications. Complex-valued CNNs have also been applied to images, showing self-regularizing effects \cite{guberman2016complex}.

\subsection{Contributions}
\looseness=-1 The group $G_n=\Z_n\times S^1$ acts on a contour $x\colon\Z_n\to\C^k$ by combining two operations: the cyclic group $\Z_n$ acts by cyclically \emph{shifting} the starting point of the contour, and $S^1$ acts by \emph{rotating} the image of $x$ about the origin in each copy of the complex plane. These actions are illustrated in \cref{fig:illustrated_contour_and_actions}. We propose rotation- and cyclic shift-equivariant (as well as invariant) layers for deep learning on contours, leveraging complex-valued convolutions over the cyclic group (circular convolution). Convolution is well known for its cyclic shift-equivariance, and working in the complex domain also ensures equivariance to planar rotations. Our framework, \textbf{RotaTouille}, comprises the following main layer types:


\begin{itemize}
	\item \textbf{Convolution layers.} Linear equivariant layers based on complex circular convolutions.
	\item \textbf{Activation functions.} Equivariant non-linear functions applied element-wise allowing the network to learn more complex functions.
	\item \textbf{Coarsening layers.} Equivariant local pooling layers downsampling the signal by coarsening the domain.
	\item \textbf{Invariant layers.} Global pooling layers producing real-valued invariant embeddings.
\end{itemize}
This layer taxonomy aligns well with the Geometric Deep Learning Blueprint proposed in \cite[p.~29]{bronstein2021geometric}. Throughout the paper, we show that the proposed layers are indeed equivariant and provide a classification of all possible equivariant non-linear activation functions. We evaluate RotaTouille in different tasks, including shape classification, shape reconstruction, and node-level curvature regression. Furthermore, we compare the effect of various design choices in an ablation study. Our implementation, including all code required to reproduce the experiments, is publicly available.\footnote{\url{https://github.com/odinhg/rotation-equivariant-contour-learning}}

\subsection{Related Work}

Shape analysis is a central topic in computer vision and machine learning, with early approaches often based on hand-crafted descriptors computed from contours. Examples include Curvature Scale Space (CSS) representations \cite{mokhtarian1997efficient, abbasi1999curvature} and generalized CSS (GCSS) \cite{benkhlifa2017novel}, which were later used in the DeepGCSS neural network classifier \cite{Mziou-Sallami2023}. The use of neural networks for contour data appeared already in \cite{gupta1990neural}, where the authors used fully connected neural networks to classify contours. Other classical works combined contour fragments with skeleton-based features to improve shape recognition \cite{shen2014shape, shen2016shape, shen2018bag}. The shape context (SC) descriptors \cite{belongie2000shape}, based on log-polar histograms, have also been shown to be effective in capturing the local geometric structure in a rotation-invariant way.

More recently, deep learning methods have become more popular for contour analysis. ContourCNN \cite{droby2021contourcnn} models planar contours using real-valued circular convolutions on point sequences, with a custom pooling strategy to discard shape-redundant points. A similar convolutional approach was proposed in \cite{mhedhbi2022new}. Two-dimensional CNNs have also been applied to contour images \cite{jeong2016rotating,atabay2016binary,caponetto2019deep} and transformer-based architectures have been adapted for shape data using SC descriptors in \cite{Jia2023shapetransformer}. None of these deep learning-based methods is intrinsically equivariant or invariant to rotations and they instead rely on data augmentation or manual feature extraction of invariant features. ShapeEmbed was recently introduced in \cite{romero2025shapeembed} as a self-supervised variational autoencoder (VAE) that takes the pairwise distance matrix of contour points as input. It produces latent representations that are invariant to a fixed set of symmetries, including rotations and cyclic shifts.

Several works have explored learning rotation equivariant or invariant representations more directly, particularly for images and point clouds. The O2-VAE model \cite{burgess2024orientation} applies steerable CNNs \cite{cohen2016steerable} to cell images, encoding them into a latent space that is invariant to planar rotations. Although effective in capturing texture- and intensity-based features, such approaches operate on pixel grids rather than directly on contour data. Group-equivariant CNNs (G-CNNs) \cite{cohen2016group} provide a general framework for building networks equivariant to discrete symmetry groups, though they have been applied mostly to images. Closer in spirit to our approach are quaternion neural networks for 3D point clouds \cite{shen20203d}, where rotation-equivariant layers are constructed using the action of unit quaternions on $\R^3$. Topological methods have also been applied to contour data. Persistent homology has been used to extract stable shape features for morphological classification and comparison \cite{bleile2023persistence, migicovsky2019}.

\section{Methodology}

We begin by defining contours and the action of the group $\Z_n \times S^1$ on them, along with the concepts of equivariant and invariant maps. Subsequently, we introduce the various equivariant and invariant layers that form RotaTouille. For completeness, \emph{groups} and \emph{group actions} are defined in \cref{sec:groups_and_actions}.

\subsection{Contours}

For an integer $n>0$, we let $\Z_n=\{0, 1, \ldots, n-1\}$ denote the \emph{cyclic group of order $n$} under addition modulo $n$. A \emph{contour} is a function $x\colon \Z_n\to \C^k$ where $k>0$. The set of all contours $X_n^k = \map(\Z_n, \C^k)$ is a vector space over $\C$ with addition and scalar multiplication defined pointwise. For any function $f\colon S\to\C^k$ where $S$ is a set, we use the shorthand notation $f_j=\pi_j\circ f$ where $\pi_j\colon \C^k\to\C$ is the projection onto the $j$-th coordinate. 

\subsection{Group Actions and Equivariant Maps}

The \emph{circle group} $S^1\subseteq\C$ consists of all unit complex numbers and the group product is given as complex multiplication. We consider the product group $G_n=\Z_n\times S^1$ and let it act on contours as follows: for $(l,w)\in G_n$ and a contour $x\in X_n^k$ we define $(l,w)\bigcdot x$ to be the contour mapping $q\mapsto wx(q-l)$. Here, $w$ acts by scalar multiplication in $\C^k$ and $q-l$ is computed modulo $n$. We show that this does indeed define a (left) group action in \cref{prop:group_action_well_defined}.


A function $f\colon X_n^k\to X_m^k$ where $m$ is a divisor of $n$ is called \emph{equivariant} if $f((l,w)\bigcdot x)=(l,w)\bigcdot f(x)$ for all $x\in X_n^k$ and all $(l,w)\in G_n$. That is, the function commutes with our group action on contours. A function $f\colon X_n^k\to Y$ where $Y$ is a set is called \emph{invariant} if $f((l,w)\bigcdot x)=f(x)$ for all $x\in X_n^k$ and all $(l,w)\in G_n$. The concepts of invariance and equivariance are more general than described here and can be defined for any group action, including group representations (also known as linear actions). For a broader exposition, see \cite[Chapter~3.1]{bronstein2021geometric} or \cite[Chapter~2.1]{weiler2023EquivariantAndCoordinateIndependentCNNs}.

\subsection{Circular Convolution as Linear Equivariant Layer}
If a map $T\colon X_n^1\to X_n^1$ satisfies $T(wx)=wT(x)$ and $T(x+y)=T(x)+T(y)$ for all $w\in S^1$ and $x,y\in X_n^1$, then $T$ is automatically $\C$-linear. This is a consequence of the fact that every complex number can be written as a (nonunique) finite sum of unit complex numbers. Now, suppose $T$ also commutes with cyclic shifts, so $T$ is any additive map that is equivariant with respect to our group action: $T((l,w)\bigcdot x)=(l,w)\bigcdot T(x)$. Then it is well known that $T$ is necessarily the circular convolution operator. For completeness, we include a proof of this in \cref{prop:linear_shift_eq_map_is_conv}.

The \emph{circular convolution} $\ast\colon X_n^1\times X_n^1\to X_n^1$ is defined by letting $(y\ast x)(q)=\sum_{j=0}^{n-1}y(j)x(q-j)$ for $x,y\in X_n^1$ and $q\in\Z_n$. In convolutional neural networks, we tend to work with kernels smaller than the signal. If $y\in X_m^1$ for some $m\leq n$, we extend $y$ with zeros to $X_n^1$ before convolving with $x$. That is, we let $y(q)=0$ whenever $q\geq m$. We also want our equivariant layers to handle multi-channel signals, both as inputs and as hidden representations. By convention, we integrate information across the input channels by summing. Fixing $\phi\in X_m^k$, we define the \emph{circular convolution operator} $\conv_\phi\colon X_n^k\to X_n^1$ as 
\[
	\conv_\phi(x) = \sum_{j=1}^k\phi_j\ast x_j
\]
for $x\in X_n^k$. We refer to $\phi$ as a \emph{filter (or kernel)} that is typically learned during training, and $m$ as the \emph{kernel size}. Note that we do not use an additive bias term as this would break rotational equivariance. In practice, we often have multiple filters in each convolutional layer. Let $\Phi=(\phi^1,\ldots,\phi^{k'})$ be a collection of $k'$ filters, often referred to as a \emph{filter bank}. The \emph{convolutional layer} denoted $\cconv_\Phi\colon X_n^k\to X_n^{k'}$ is defined coordinate-wise by letting $\cconv_\Phi(x)_j = \conv_{\phi^j}(x)$ for all $j=1,\ldots,k'$. We show that the convolutional layer satisfies the equivariance property. The proof that this satisfies equivariance is provided in \cref{sec:proofs}.

\begin{restatable}{proposition}{convequivariant}
The convolutional layer $\cconv_\Phi\colon X_n^k\to X_n^{k'}$ is equivariant, that is, for all $(l,w)\in G_n$ and $x\in X_n^k$, we have $(l,w)\bigcdot\cconv_\Phi(x)=\cconv_\Phi((l,w)\bigcdot x)$.
\end{restatable}

\subsection{Non-linear Activation Functions}\label{sec:nonlinearities}

To learn more complex, non-linear functions on the space of contours, we need to introduce non-linear activation functions between the linear layers. We also want these activation functions to be equivariant. Any function $a\colon\C\to\C$ can be extended to a function $X_n^k\to X_n^k$ by point-wise application in each coordinate, and this function is equivariant if and only if $a(wz)=wa(z)$ for all $z\in\C$ and $w\in S^1$. We classify such functions in the following proposition proven in \cref{sec:proofs}:

\begin{restatable}{proposition}{nonlinearequivariant}
A function $a\colon\C\to\C$ satisfies the equivariance condition $a(wz)=wa(z)$ for all $z\in\C$ and $w\in S^1$ if and only if there exists a function $g\colon[0,\infty)\to\C$ such that $a(z)=g(\vert z\vert)z$ for all $z\in\C$.
\end{restatable}

Functions on this form already appear in the existing literature on complex-valued neural networks and we list some of them in \cref{tab:equivariant_activations}.
\begin{table}[ht]
    \small
	\centering
	\caption{Examples of equivariant activation functions for contours.}
	\label{tab:equivariant_activations}
	\begin{tabular}{ll}
		\toprule
		\textbf{Activation Function} & \textbf{Choice of $g(r)$} \\
		\midrule
		Siglog \cite{virtue2017better} & $(r+1)^{-1}$\\
		ModReLU\tablefootnote{The ModReLU activation function has a learnable bias parameter $b\in\R$.} \cite{arjovsky2016unitary,worrall2017harmonic} & $\operatorname{ReLU}(r+b)r^{-1}$ \\ 
		Amplitude-Phase \cite{Hirose2012} & $\tanh(r)r^{-1}$ \\ 
		\bottomrule
	\end{tabular}
\end{table}

\subsection{Coarsening Functions}

A \emph{coarsening (or local pooling) function} is an equivariant function $P\colon X_n^k\to X_m^k$ with $m<n$. In this section, we suppose that $n=mp$ for some integer $p>1$ and think of $p$ as the coarsening factor. We fix two positive integers $n_s$ and $n_d$ called \emph{stride} and \emph{dilation}, respectively, and let $\bigoplus\colon \C^p\to \C$ be any function such that $\bigoplus(wz)=w\bigoplus(z)$ for all $w\in S^1$ and $z\in\C^p$. For convenience, we write $\bigoplus_{j=0}^{p-1}z_j$ instead of $\bigoplus(z_0,\ldots,z_{p-1})$. Examples include magnitude-based argmax $\bigoplus_{j=0}^{p-1}z_j = \arg\max_{j=0,\ldots,p-1}\vert z_j\vert$, and the mean $\bigoplus_{j=0}^{p-1}z_j = p^{-1}\sum_{j=0}^{p-1} z_j$ for $z\in\C^p$. For a contour $x\in X_n^1$ we define the coarsening function $P_\oplus\colon X_n^1\to X_m^1$ by letting

\[
P_\oplus(x)(q)=\bigoplus_{j=0}^{p-1}x(qn_s+n_dj)
\]

for all $q\in\Z_m$ and extend this to a function $P_\oplus\colon X_n^k\to X_m^k$ coordinate-wise. We refer to the case with $n_s=1$ and $n_d=m$ as \emph{coset pooling}. That is, we pool over the cosets $q+\langle m\rangle=\{q,q+m,\ldots,q+(p-1)m\}$ for $q\in\Z_m$ and the function $P_\oplus$ is equivariant as we have 

\[
P_\oplus((l,w)\bigcdot x)(q)=\bigoplus_{j=0}^{p-1}wx((q+mj)-l)=w\bigoplus_{j=0}^{p-1}x((q-l)+mj)=((l,w)\bigcdot P_\oplus(x))(q).
\]

For $n_s=p$ and $n_d=1$, we get \emph{strided pooling}, which is the most common type of pooling operation in CNNs. However, strided pooling is only approximately equivariant in the sense that cyclically shifting the input signal by $lp$, cyclically shifts the output signal by $l$:

\[
	P_\oplus((lp,w)\bigcdot x)(q)=\bigoplus_{j=0}^{p-1}wx((qp+j)-lp)=w\bigoplus_{j=0}^{p-1}x((q-l)p+j)=((l,w)\bigcdot P_\oplus(x))(q).
\]

In other words, strided pooling is equivariant with respect to the subgroup $p\Z_n\times S^1$ of $G_n$, but is true to the assumption that points close in the domain $\Z_n$ tend to be mapped to close points in $\C^k$. Coset pooling, on the other hand, is truly equivariant, but does not aggregate locally.

\subsection{Invariant Feature Extraction}

A \emph{global pooling function} is an invariant map $A\colon X_n^k\to Y$ where $Y$ is some set. We will only consider $Y=\R^k$. The idea then is that $A$ aggregates channel-wise information into a real-valued contour embedding that can be used in downstream tasks. In the implementation, we use a combination of the mean and maximum of absolute values similar to what is done in \cite{shen2018bag}. Specifically, we define the global pooling function $A\colon X_n^k\to\R^k$ by setting the $i$-th component of $A(x)$ to be
\begin{equation}\label{eq:global_pooling_learnable} 
	\alpha n^{-1}\sum_{j=0}^{n-1}\vert x_i(j)\vert+(1-\alpha)\max_{j=0,\ldots,n-1}\vert x_i(j)\vert
\end{equation}

for $x\in X_n^k$ and where $\alpha\in[0,1]$ is a learned parameter.


\section{Experiments}

We perform various experiments to assess the feasibility of RotaTouille. This section outlines the details of preprocessing and implementation, introduces the datasets, and presents the results.

\subsection{Preprocessing and Implementation Details}

\paragraph{Data preprocessing} For image data, we extract contours through a three-step process:
\begin{enumerate}
\item\textbf{Binarization.} For grayscale images, we convert the images to binary images by applying thresholding, for example, by using Otsu's method \cite{otsu1975threshold}.
\item\textbf{Contour extraction.} We use the OpenCV library \cite{opencv_library} to extract contours from the binary images, selecting the one with the largest area in the case of multiple contours.
\item\textbf{Equidistant resampling.} We resample the contour to a fixed number of points, equidistantly with respect to the Euclidean distance.
\end{enumerate}

See \cref{fig:contour_processing} for an illustration of the image-to-contour conversion process.

\begin{figure}[ht]
	\centering
	\includesvg[width=0.5\textwidth]{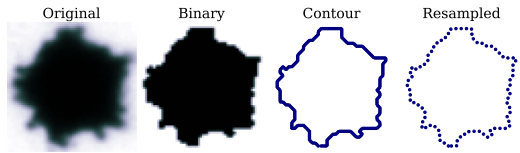}
	\caption{Image-to-contour conversion process. The input image is binarized, contours are extracted, and then resampled to a fixed length with equidistant points.} 
	\label{fig:contour_processing}
\end{figure}

Before passing the contours to the model, they are centered at their mean and rescaled by dividing by the standard deviation of the magnitudes, as this showed to improve convergence of the complex-valued models. The image datasets are standardized using statistics computed on the training dataset.

\paragraph{Multi-scale invariant features.}\looseness=-1 Choosing the optimal kernel size in 1D CNNs is not straightforward. One approach is to build an ensemble of models with different kernel sizes to capture features at multiple scales. While effective, this increases the number of parameters and computational cost. In our classification experiments, extracting invariant features at different network depths improved performance without adding learnable parameters. \Cref{fig:multi_scale_invariant_features} illustrates the structure of a convolutional block.

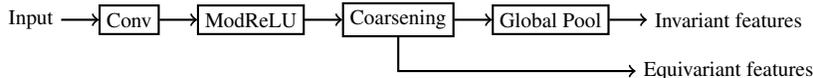
\begin{figure}[ht]
\centering
\begin{tikzpicture}[scale=1.0, every node/.style={scale=0.8}, thick, node distance={0.25cm and 0.5cm}]

\node (input) {Input};
\node[rectangle, draw, align=center, right=of input] (conv) {Conv};
\node[rectangle, draw, align=center, right=of conv] (modrelu) {ModReLU};
\node[rectangle, draw, align=center, right=of modrelu] (coarse) {Coarsening};
\node[rectangle, draw, align=center, right= of coarse] (pool) {Global Pool};

\node[right=of pool] (invfeat) {Invariant features};
\node[below=of invfeat] (eqfeat) {Equivariant features};

\draw[->] (input) -- (conv);
\draw[->] (conv) -- (modrelu);
\draw[->] (modrelu) -- (coarse);

\draw[->] (coarse) -- (pool);
\draw[->] (coarse.south) |- (eqfeat.west);

\draw[->] (pool) -- (invfeat);

\end{tikzpicture}

\caption{Multi-scale invariant feature extraction. After each convolutional block, a global pooling operation produces an invariant feature vector at that scale. Equivariant features are passed to the next block, and the final feature vector concatenates the invariant vectors from all depths.}\label{fig:multi_scale_invariant_features}

\end{figure}

\paragraph{Contour re-centering} Before every convolutional layer and global pooling layer, we re-center the contour channel-wise as this showed improvements in training stability and performance of our method. This re-centering is an equivariant operation.

\paragraph{Model architectures}
For classification and regression, we use the ModReLU activation with strided local pooling and global pooling layers that aggregate via a learnable combination of mean and absolute-value maximum. These choices were guided by preliminary experiments on related datasets, which indicated robustness. The classification model has seven convolutional layers followed by a fully connected head, while the regression model uses four convolutional layers with a linear head. For the \texttt{RotatedMNIST} classification task, radial histogram features are concatenated with the invariant features from the global pooling layers before the classification head. The contour autoencoder has five convolutional layers in both encoder and decoder, including pooling and unpooling layers. Full architecture details are provided in \cref{sec:model_details}.

\paragraph{Training, model selection and evaluation.}\looseness=-1 We use the Adam optimizer in all experiments and use $10\%$ of the training data for validation to choose the best model for evaluation on the separate test data. For the classification tasks, the test data is randomly rotated to test the robustness of the candidate models. In this case, we also apply random rotation to augment the training data for the non-invariant baseline models to make the comparison fair. Each experiment is repeated ten times with different seeds, and we report mean test scores together with standard deviations. For more details on hyperparameter values for the different model and dataset combinations, see \cref{tab:training_details} in the appendix.

\subsection{Datasets}

Now, we describe the five datasets used in our experiments. We have three datasets for shape classification, one for shape reconstruction, and one for node-level regression.

\paragraph{Fashion MNIST} The Fashion MNIST dataset \cite{xiao2017fashionmnist} contains $60{,}000$ training and $10{,}000$ test grayscale images of clothing items, each of size $28\times28$ pixels.  We convert these images to contours by applying the image to contour conversion process described earlier. For the baseline CNN, we create two versions of the dataset: one with filled contours and one with unfilled contours. The filled contours are binary images where the contour is filled in, while the unfilled contours are binary images where only the contour is drawn. Hence, we discard all texture information in the images and only use the shape of the clothing items. We refer to this dataset as \texttt{FashionMNIST}.

\paragraph{ModelNet} We create a multi-channel dataset based on ModelNet \cite{wu20153d}, which contains 3D CAD models from various object categories. We select the classes \texttt{bottle}, \texttt{bowl}, \texttt{cone}, and \texttt{cup}, chosen for their tendency to form a single connected component in cross-section. For each model, we uniformly sample a point cloud and divide it into four disjoint volumes along the second axis. From each volume, we generate a $64 \times 64$ binary image by projecting points onto an orthogonal plane, then extract contours from the four image channels. Each sample thus consists of a stack of four contour slices. The final \texttt{ModelNet} dataset contains 644 training examples and 160 test examples.

\paragraph{Rotated MNIST}\looseness=-1 The rotated MNIST dataset \cite{larochelle2007empirical} is a modified MNIST \cite{lecun2002gradient} where each image is randomly rotated by an angle uniformly sampled from $[0, 2\pi)$. It contains $12{,}000$ training and $50{,}000$ test examples of size $28\times28$ pixels, grayscale, with handwritten digits $0$–$9$. We convert the images to contours using the process described above and include a simple rotation-invariant texture feature based on the radial histogram (RH). The RH captures the distribution of pixel intensities by dividing the image into $14$ radial bins and counting pixels in each bin. We refer to this dataset as \texttt{RotatedMNIST}.

\paragraph{Cell Shapes}\looseness=-1 We use a subsample of the Profiling Cell Shape and Texture (PCST) benchmark dataset introduced \cite{burgess2024orientation} with $1000$ shapes from each of the $9$ categories corresponding to different combinations of eccentricity and boundary randomness. We extract contours and refer to this dataset as \texttt{PCST}.

\paragraph{Curvature contours}  We construct a synthetic dataset for curvature regression. Given a continuous contour $\gamma\colon[0,2\pi)\to\C$ where $\gamma(t) = x(t) + iy(t)$ with $x$ and $y$ twice differentiable, the \emph{curvature $\kappa$} is defined in terms of the first and second derivatives as
\[
\kappa = \frac{\left| x'y'' - y'x''\right|}{\left( x'^{\,2} + y'^{\,2}\right)^{3/2}}.
\]
The curvature measures the local deviation of the curve from a straight line. The \texttt{Curvature} dataset is generated as follows: First, we sample a number of modes $m$ uniformly from $\{2, 3, 4, 5\}$. Then, for each mode $k=1,\ldots,m$, we sample four coefficients $a_k^x, b_k^x, a_k^y, b_k^y$ independently from the uniform distribution on $[-1, 1]$, and define the contour $\gamma=x+iy$ where 
\[
	x(t) = \sum_{k=1}^{m} \left( a_k^x \cos(k t) + b_k^x \sin(k t) \right)\quad\text{and}\quad y(t) = \sum_{k=1}^{m} \left( a_k^y \cos(k t) + b_k^y \sin(k t) \right).
\]

This construction ensures periodicity and smoothness while introducing controlled geometric variability through the random coefficients and number of modes. Lastly, we sample $100$ points equidistant in arc length, and use the curvature in these points as the ground truth. See \cref{fig:curvature_contours_examples} for examples of generated curves. We discard contours with maximum curvature greater than $1000$ to avoid extreme values. We generate $2000$ contours for training and $1000$ contours for test data.

\begin{figure}[ht]
	\centering
	\includesvg[width=0.9\textwidth]{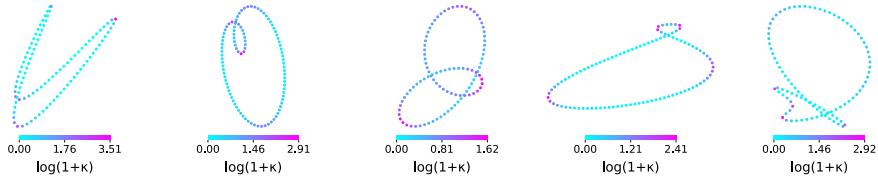}
	\caption{Five example contours from the \texttt{Curvature} dataset with log-curvature values colored.} 
	\label{fig:curvature_contours_examples}
\end{figure}

\subsection{Results}

\paragraph{Shape classification} \looseness=-1 We evaluate RotaTouille on the classification datasets \texttt{FashionMNIST}, \texttt{ModelNet}, and \texttt{RotatedMNIST}. The cross-entropy loss function is used for training and performance is measured using test accuracy (or test error). As a first proof of concept, we compare RotaTouille with the baseline models on \texttt{FashionMNIST}. The baseline CNN uses contour images, and we test both filled and unfilled contours. We also include a baseline graph neural network based on the graph convolutional network (GCN) introduced in \cite{kipf2016semi}. The graph is a fixed cycle graph with the Cartesian coordinates of the contour points as the node features. As an additional baseline, we implement the ContourCNN method using the optimal configuration reported in \cite{droby2021contourcnn}. To evaluate RotaTouille on multichannel data, we compare it with the baseline models on \texttt{ModelNet}. The results are listed in \cref{tab:fashion_mnist_results}, where RotaTouille slightly outperforms the baseline models on both datasets. The GCN model performs poorly on \texttt{FashionMNIST}, but is comparable to RotaTouille on the \texttt{ModelNet} dataset.

\begin{table}[ht]
    \small
	\centering
	\caption{Comparison of accuracies on the \texttt{FashionMNIST} and \texttt{ModelNet} contour datasets between RotaTouille and the baseline models. Accuracies are computed on the test dataset.}
	\label{tab:fashion_mnist_results}
	\begin{tabular}{lll}
		\toprule
		\textbf{Model} & \multicolumn{2}{c}{\textbf{Accuracy}} \\
			       & \texttt{FashionMNIST} & \texttt{ModelNet} \\ 
		\midrule
		CNN (filled contours) & $0.849\pm0.002$ & $0.898\pm0.032$ \\ 
		CNN (unfilled contours) & $0.852\pm0.001$ & $0.905\pm0.012$\\ 
        GCN & $0.626\pm0.003$ & $0.923\pm0.027$\\
        ContourCNN & $0.771\pm0.004$ & $0.849\pm0.022$\\
		\midrule
		RotaTouille & $0.867\pm0.002$ & $0.934\pm0.016$ \\ 
		\bottomrule
	\end{tabular}
\end{table}

\looseness=-1 The \texttt{RotatedMNIST} dataset was originally designed to evaluate model robustness to rotations and has been widely used as a benchmark in prior work. We evaluate RotaTouille on this dataset to compare with existing methods. We report test error, defined as $1-\text{accuracy}$, directly quantifying misclassification and allowing comparison with prior works using the Rotated MNIST dataset. As shown in \cref{tab:rotated_mnist_results}, using only contours yields a test error of $5.70\%$, which is not particularly competitive. However, augmenting our method with the simple radial histogram (RH) feature reduces the error to $3.72\%$, outperforming some of the other methods, including a image-based CNN used in \cite{cohen2016group}. While this does not match the accuracy of the most competitive methods, it demonstrates that a contour-based representation, when combined with a basic invariant feature, can achieve competitive performance on a challenging rotation benchmark.

\begin{table}[ht]
    \small
	\centering
	\caption{Comparison of test errors (in percent) on the \texttt{RotatedMNIST} dataset between existing methods and RotaTouille, both with and without the invariant radial histogram (RH) feature. Results marked with * are taken from previous papers (not re-implemented).}
	\label{tab:rotated_mnist_results}
	\begin{tabular}{ll}
		\toprule
		\textbf{Method} & \textbf{Test error ($\%$)} \\
		\midrule
		SVM (RBF kernel)* \cite{larochelle2007empirical} & $10.38\pm0.27$\\ 
		TIRBM* \cite{sohn2012learning} & $4.2$\\
		RC-RBM+Gradients IHOF* \cite{schmidt2012learning} & $3.98$\\
		CNN* \cite{cohen2016group} & $5.03\pm0.0020$ \\
		P4CNN* \cite{cohen2016group} & $2.28\pm0.0004$\\
		H-Net* \cite{worrall2017harmonic} & $1.69$\\
        GCN & $48.44\pm 0.79$\\
        ContourCNN \cite{droby2021contourcnn} & $21.79\pm 1.12$\\
		\midrule
		RotaTouille (contours only) & $5.70\pm0.13$ \\ 
		RotaTouille (contours + RH feature) & $3.72\pm0.11$ \\ 
		\bottomrule
	\end{tabular}
\end{table}

\paragraph{Shape reconstruction} \looseness=-1 To demonstrate the flexibility of RotaTouille, we include an unsupervised learning task. Here, we train an autoencoder for contour reconstruction on the \texttt{PCST} dataset, and compare it to a similar architecture on binary images. We use the mean square error (MSE) as the reconstruction loss function. Using $10\%$ of the dataset, we evaluate performance by visual inspection and compute the intersection over union (IoU) between the original shape and its reconstruction. For the contour-based model, we first convert the contours to binary images to compute the IoU scores.

\looseness=-1 See \cref{fig:autoencoder_reconstructions} for visualizations of the reconstructions. Both models are able to reconstruct the overall shapes, but seem to struggle with high frequency details. Based on the appearance of the reconstructions, our method is able to capture sharp corners better than the image-based model. This is particularly clear in example 2 and 3 of \cref{fig:autoencoder_reconstructions}. Another advantage of the contour-based model is that it is guaranteed to produce valid contours, while the image-based model can produce invalid shapes with holes or multiple components. Moreover, it is not restricted to a fixed pixel grid. In terms of IoU, both models scores $0.97$ on the validation data. 

\begin{figure}[ht]
	\centering
	\raisebox{1.2em}{ 0. }\includegraphics[width=0.3\textwidth]{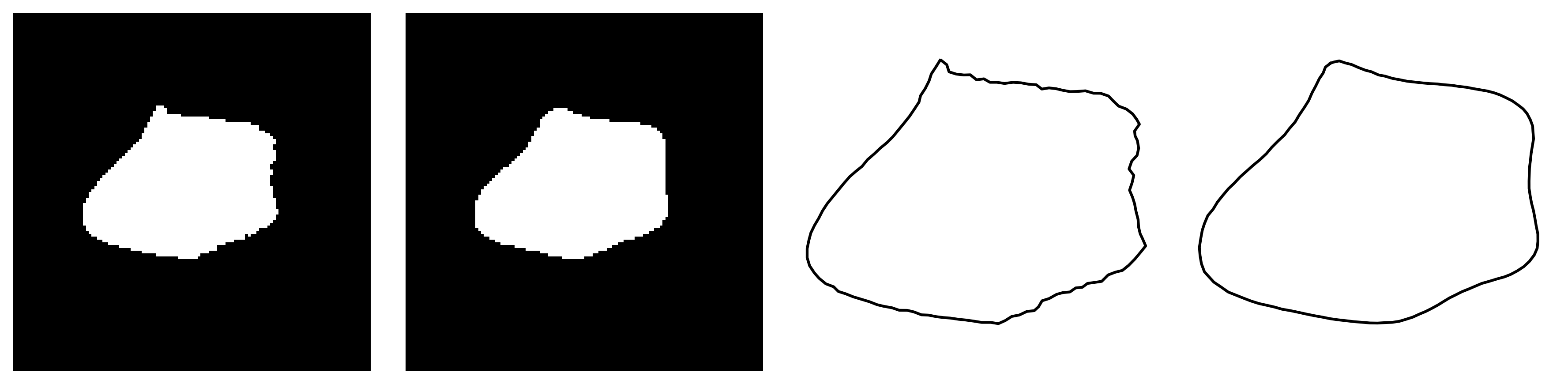}\qquad
	\raisebox{1.2em}{ 1. }\includegraphics[width=0.3\textwidth]{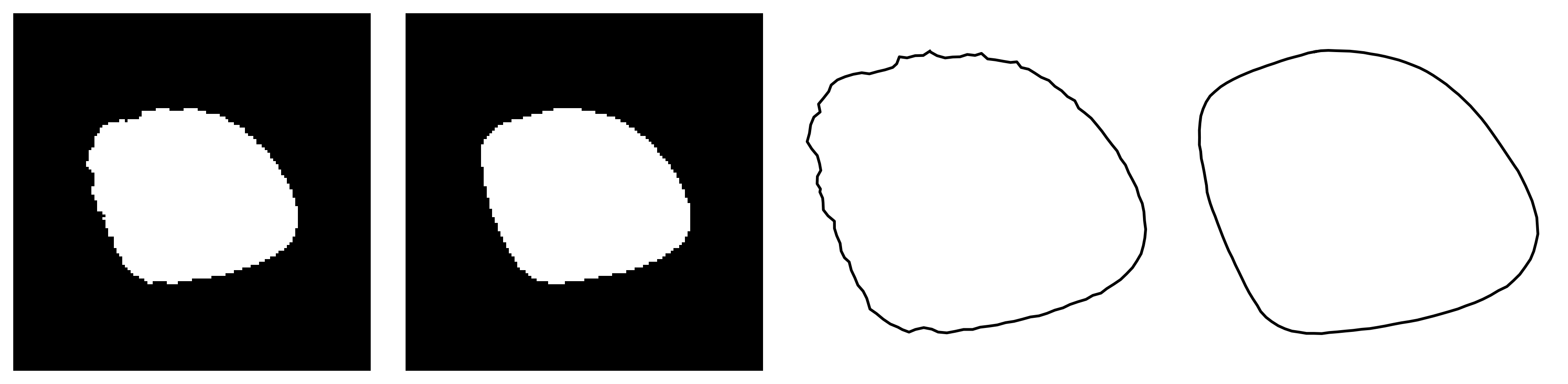}

	\raisebox{1.2em}{ 2. }\includegraphics[width=0.3\textwidth]{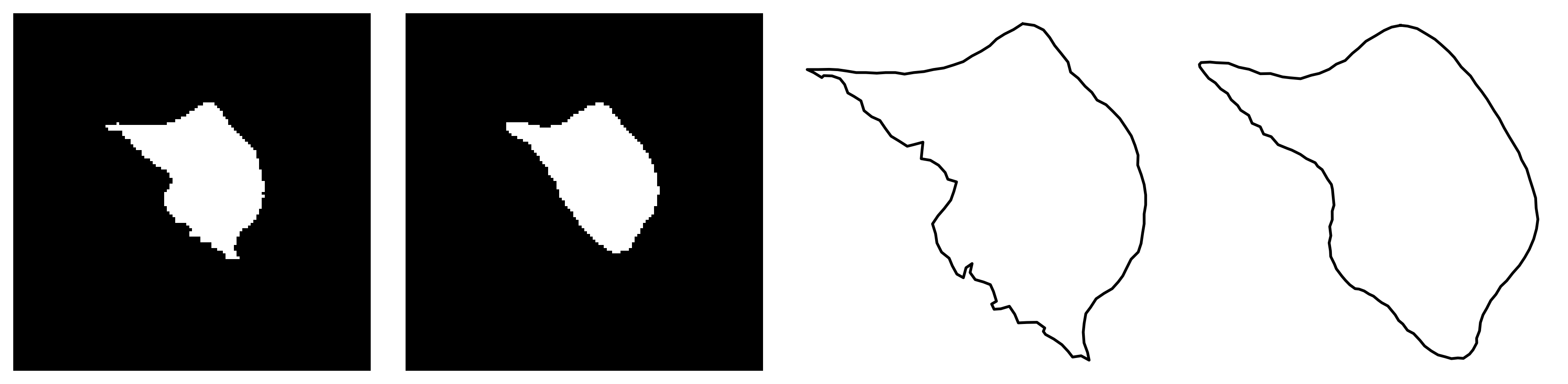}\qquad
	\raisebox{1.2em}{ 3. }\includegraphics[width=0.3\textwidth]{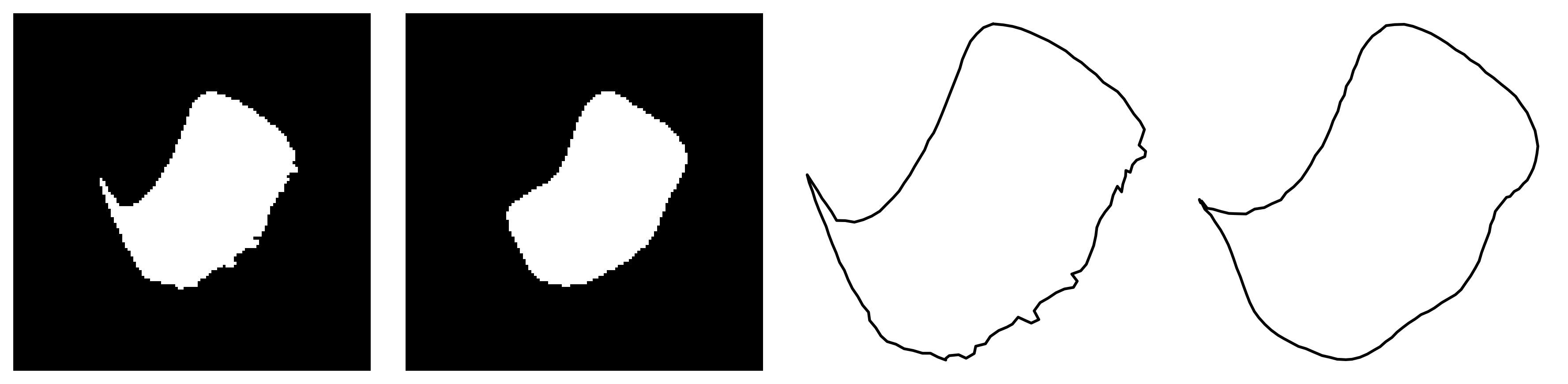}
	
	\raisebox{1.2em}{ 4. }\includegraphics[width=0.3\textwidth]{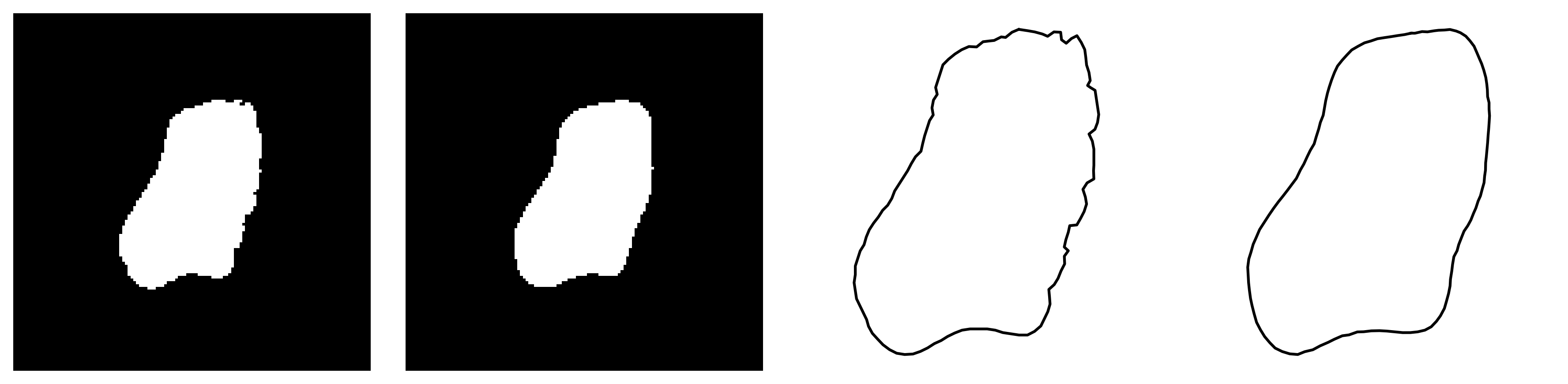}\qquad
	\raisebox{1.2em}{ 5. }\includegraphics[width=0.3\textwidth]{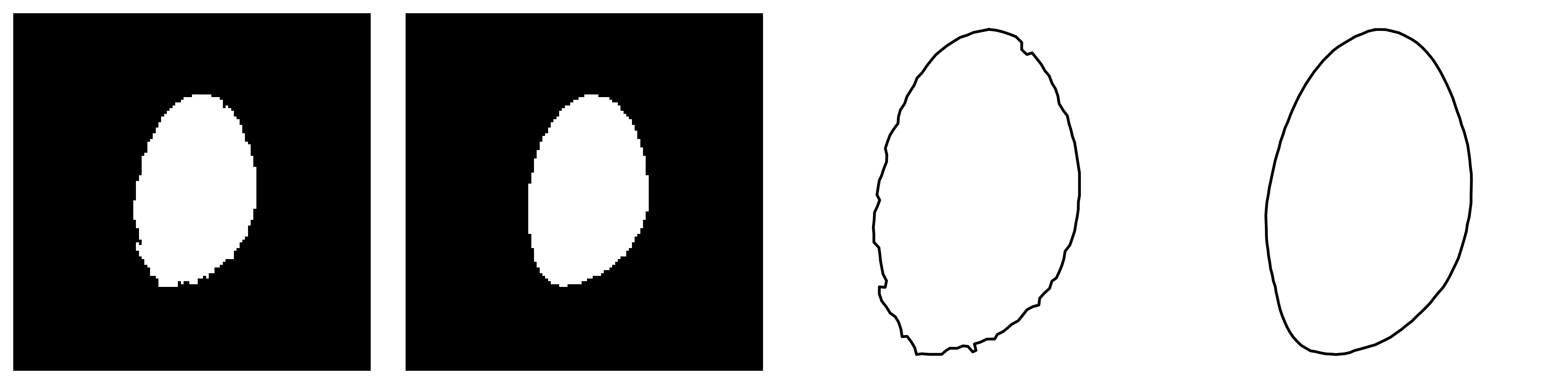}

	\raisebox{1.2em}{ 6. }\includegraphics[width=0.3\textwidth]{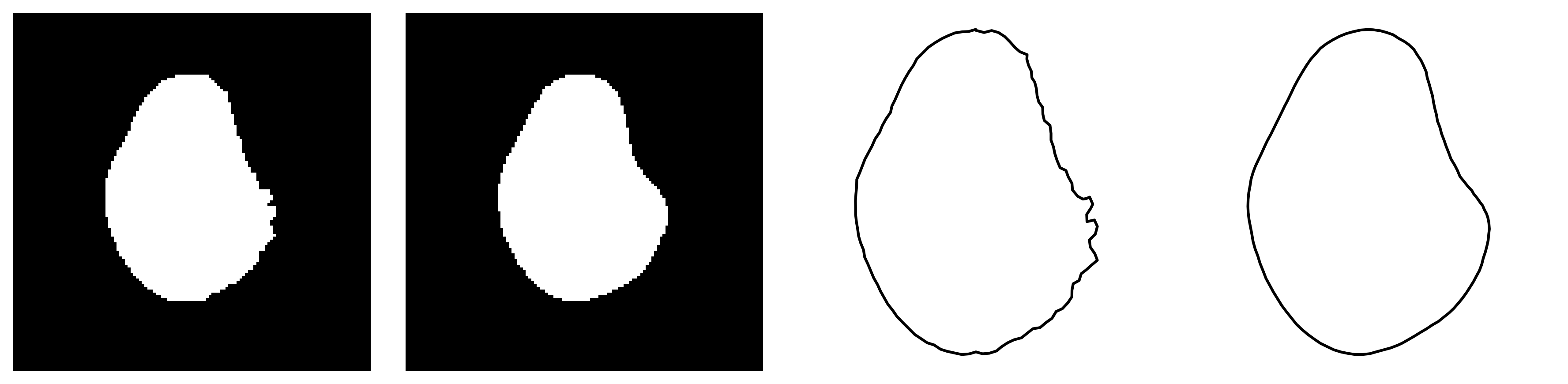}\qquad
	\raisebox{1.2em}{ 7. }\includegraphics[width=0.3\textwidth]{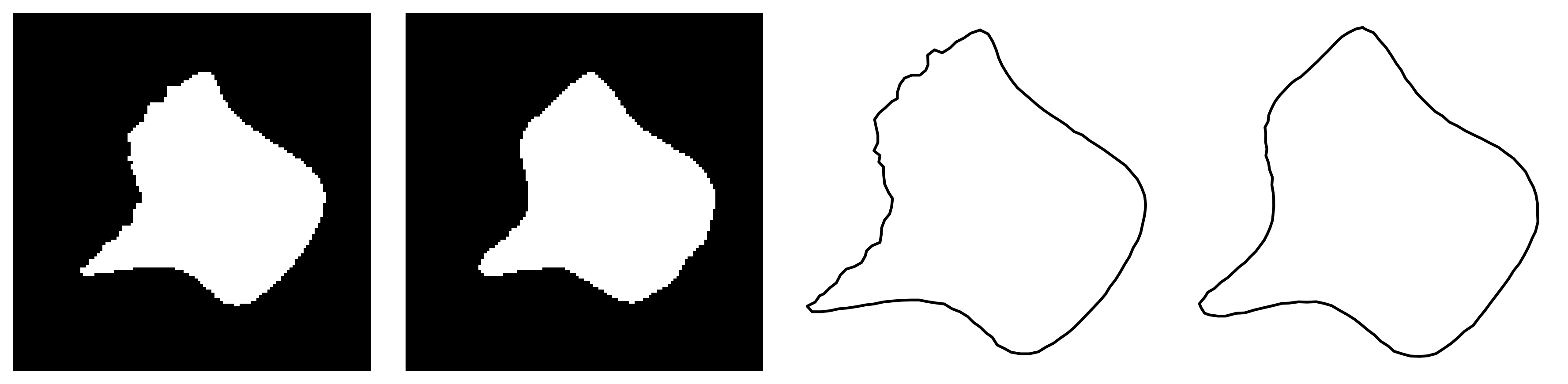}

	\raisebox{1.2em}{ 8. }\includegraphics[width=0.3\textwidth]{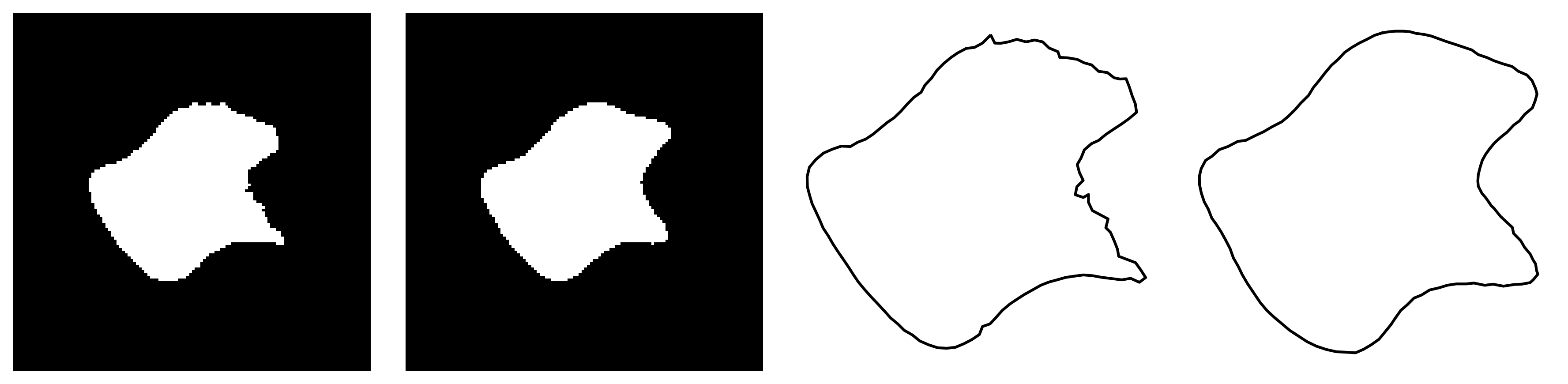}\qquad
	\raisebox{1.2em}{ 9. }\includegraphics[width=0.3\textwidth]{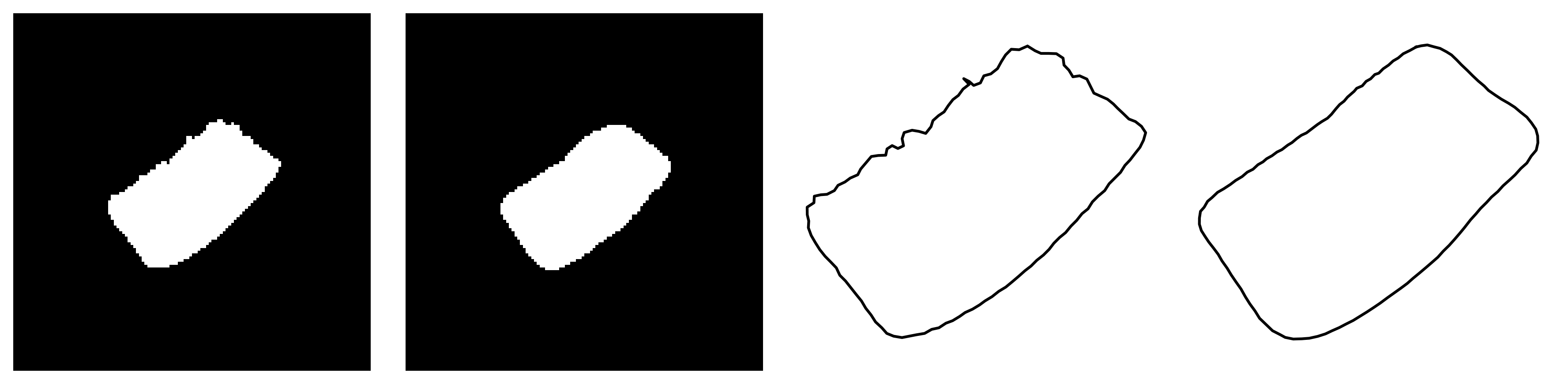}
	\caption{Ten randomly selected \texttt{PCST} validation examples (two per row). From left to right: original image, image-based autoencoder reconstruction, original contour, contour-based autoencoder reconstruction.}
	\label{fig:autoencoder_reconstructions}
\end{figure}

\paragraph{Curvature regression} We also consider a node-level regression task in which the goal is to predict the curvature at each point of a discrete contour from the \texttt{Curvature} dataset. For this task, we use the mean absolute error (MAE) as both the loss function and the evaluation metric. As baselines, we implement a 1D real-valued CNN and a graph-based GCN model operating on the Cartesian coordinates of the contour points, as well as a circle fitting method that estimates curvature by fitting a circle to every three consecutive contour points and computing the curvature from the fitted radius. The baseline CNN employs circular convolutions like ContourCNN, but without priority pooling. As shown in \cref{tab:curvature_regression_results}, RotaTouille achieves the lowest MAE and highest $R^2$.


\begin{table}[ht]
    \small
	\centering
	\caption{Mean absolute errors (MAE) and $R^2$ scores on the test set for the curvature regression task.}
	\label{tab:curvature_regression_results}
	\begin{tabular}{lll}
		\toprule
		\textbf{Model} & \textbf{MAE} & $\phantom{-}\mathbf{R^2}$ \\
		\midrule
		Circle fitting & $0.4411$ & $\phantom{-}0.1071$ \\
		CNN (real-valued) & $0.4479\pm0.0046$ & $\phantom{-}0.2220\pm0.0077$\\ 
        GCN & $0.9063\pm0.0003$ & $-0.0064\pm0.0001$ \\
		\midrule
		RotaTouille & $0.3944\pm0.0092$ & $\phantom{-}0.2480\pm0.0149$\\
		\bottomrule
	\end{tabular}
\end{table}

\subsection{Ablation Study}\label{sec:ablation_study}

We investigated how different design choices in our framework affect model performance. All ablation studies were conducted on the \texttt{FashionMNIST} contour dataset, with all other components and the architecture kept fixed as in the main experiments. We evaluate several configurable components of the model. Local pooling (coarsening) is tested with no pooling, or with strided and coset pooling using mean, max, or a learnable aggregation. We test the activation functions listed in \cref{tab:equivariant_activations}. The sampling rate of contour points is varied across 16, 32, 64, 128, and 256, while kernel sizes range from 3 to 13 in odd increments. For global pooling, we compare mean, max, and a learnable combination as in \cref{eq:global_pooling_learnable}. Numerical results are provided in \cref{sec:ablation_results} in the appendix.

We observe minimal performance differences across configurations, except for the local pooling strategy: Despite its approximate equivariance, strided pooling consistently outperforms coset pooling, while omitting pooling yields similar accuracy but increases training time due to more contour points. Performance is largely robust to kernel size, while sampling rate has a small positive effect, with mean accuracy increasing from $0.855$ (16 points) to $0.878$ (256 points). This robustness likely reflects the simple shapes in \texttt{FashionMNIST}. For more complex shapes, appropriate choice of sampling rate and kernel sizes may be needed to capture high-frequency details.

\section{Conclusion}

We introduced a framework for deep learning on contours that is equivariant to rotations by defining an action of the group $\Z_n\times S^1$ on contours and constructing corresponding complex-valued convolutional layers. We also developed non-linear activation functions and pooling operations that preserve equivariance, as well as a global pooling layer to produce invariant features. While the performance gains are modest, RotaTouille provides a easy-to-implement framework that explicitly encodes rotational symmetry on contour data. This makes it a promising option for practical applications in shape analysis and other fields where rotation equivariance and invariance is essential.

\printbibliography

\newpage
\appendix
\section{Appendix}

\subsection{Groups and Group Actions}\label{sec:groups_and_actions}

We recall some basic definition from group theory used in this manuscript. A \emph{group} $G$ is a non-empty set together with a binary operation $G\times G\to G$ mapping $(g,h)\mapsto gh$ and a distinguished element $e\in G$ called the \emph{identity element} of $G$ such that the following identities hold:
\begin{description}
    \item[Associativity.] For all $g,h,k\in G$, we have $(gh)k=g(hk)$,
    \item[Identity.] For all $g\in G$, we have $eg=g=ge$, and 
    \item[Inverses.] For all $g\in G$, there exists an $h\in G$ such that $gh=e=hg$. The element $h$ is unique, and we usually denote it by $g^{-1}$.
\end{description}

Given two groups $G$ and $H$, one can form the \emph{product group} $G\times H$ consisting of all pairs $(g,h)$ with $g\in G$ and $h\in H$. The binary operation is defined coordinate-wise as $(g_1,h_1)(g_2,h_2)=(g_1g_2,h_1h_2)$ for $g_1,g_2\in G$ and $h_1,h_2\in H$.

For a group $G$ with identity element $e$, and a set $S$, a \emph{(left) group action} of $G$ on $S$ is a map $G\times S\to S$ mapping $(g,x)\mapsto g\bigcdot x$ that satisfies the following two conditions:
\begin{description}
    \item[Identity.] For all $x\in S$, we have $e\bigcdot x=x$, and
    \item[Compatibility.] For all $g,h\in G$ and $x\in S$, we have $g\bigcdot(h\bigcdot x)=(gh)\bigcdot x$.
\end{description}

\begin{proposition}\label{prop:group_action_well_defined}
Let $G_n=\Z_n\times S^1$ and $X_n^k=\map(\Z_n,\C^k)$. The map $G_n\times X_n^k\to X_n^k$ mapping $((l,w), x)\mapsto (l,w)\bigcdot x$ where $((l,w)\bigcdot x)(q)=wx(q-l)$ for $q\in\Z_n$ is a group action.
\end{proposition}
\begin{proof}
Let $x\in X_n^k$. The identity element in $G_n$ is the element $(0,1)$, so the map $e\bigcdot x$ sends every $q\in Z_n$ to $1x(q-0)=x(q)$. For the compatibility condition, we let $(l,w),(l',w')\in G_n$, $q\in\Z_n$ and straight-forward computation shows that
\begin{align*}
(((l,w)(l',w'))\bigcdot x)(q)
&=((l+l',ww')\bigcdot x)(q)
=ww'x(q-(l+l'))\\
&=w'wx(q-l-l')
=((l',w')\bigcdot((l,w)\bigcdot x))(q).
\end{align*}
Since this holds for all $q\in\Z_n$, the maps $((l,w)(l',w'))\bigcdot x$ and $(l',w')\bigcdot((l,w)\bigcdot x)$ are the same.
\end{proof}

\subsection{Proofs} \label{sec:proofs}

\begin{proposition}\label{prop:linear_shift_eq_map_is_conv}
If $T\colon X_n^1\to X_n^1$ is $\C$-linear and commutes with cyclic shifts, then $T$ is a circular convolution operator.
\end{proposition}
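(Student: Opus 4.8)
The plan is to exploit the fact that $X_n^1=\map(\Z_n,\C)$ has a canonical finite basis of indicator functions on which the cyclic shifts act simply transitively, so that a $\C$-linear, shift-equivariant $T$ is completely determined by a single value, which will turn out to be the convolution kernel. Concretely, for $i\in\Z_n$ let $\delta_i\in X_n^1$ be given by $\delta_i(q)=1$ if $q=i$ and $\delta_i(q)=0$ otherwise; these form a $\C$-basis of $X_n^1$, and every $x\in X_n^1$ satisfies $x=\sum_{i=0}^{n-1}x(i)\delta_i$. Writing $S_l$ for the cyclic shift $x\mapsto(l,1)\bigcdot x$, so that $(S_l x)(q)=x(q-l)$ with indices taken modulo $n$, a direct check gives $S_l\delta_i=\delta_{i+l}$; that is, the shifts merely permute the basis.

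First I would set $\phi:=T(\delta_0)\in X_n^1$ and declare this to be the candidate kernel. Since $\delta_i=S_i\delta_0$ and $T$ commutes with cyclic shifts, equivariance yields $T(\delta_i)=T(S_i\delta_0)=S_i T(\delta_0)=S_i\phi$ for every $i\in\Z_n$. Next I would take an arbitrary $x\in X_n^1$, expand it as $x=\sum_{i=0}^{n-1}x(i)\delta_i$, and apply $\C$-linearity of $T$ to obtain $T(x)=\sum_{i=0}^{n-1}x(i)\,S_i\phi$. Evaluating at $q\in\Z_n$ gives $T(x)(q)=\sum_{i=0}^{n-1}x(i)\phi(q-i)$, and the change of summation variable $j=q-i$ (legitimate because $i\mapsto q-i$ is a bijection of $\Z_n$) turns this into $\sum_{j=0}^{n-1}\phi(j)x(q-j)=(\phi\ast x)(q)=\conv_\phi(x)(q)$. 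Hence $T=\conv_\phi$ with $\phi=T(\delta_0)$, which is exactly the assertion.

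The hard part, such as it is, is purely notational: one must verify the shift identity $S_l\delta_i=\delta_{i+l}$ carefully with the reduction modulo $n$, and then match the index convention in the definition $(y\ast x)(q)=\sum_j y(j)x(q-j)$, in which the kernel carries the free summation index $j$ while the signal carries $q-j$. The reindexing step above is precisely what reconciles the expansion $\sum_i x(i)\phi(q-i)$ with this convention, and it implicitly uses the commutativity of circular convolution. I would also note in passing that the kernel need not be truncated: any $\phi\in X_n^1$ is an admissible filter of kernel size $m=n$, so there is no need to shorten it to $m<n$.
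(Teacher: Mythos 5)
Your proof is correct and follows essentially the same route as the paper: expand $x$ in the Kronecker delta basis, use shift-equivariance to reduce $T$ to its value $\phi=T(\delta_0)$ on $\delta_0$, and apply linearity to recover the convolution formula. The only difference is cosmetic — the paper stops at $T(x)(q)=\sum_j x(j)\phi(q-j)$ and calls it the circular convolution of $x$ and $\phi$, whereas you perform the extra reindexing $j\mapsto q-j$ to match the stated convention for $\conv_\phi$ exactly, which is a harmless (and slightly tidier) finishing touch.
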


\begin{proof}    
Let $\delta_j\colon[n]\to\C$ be the Kronecker delta function defined by
\begin{equation*}
\delta_j(q) = \begin{cases}
	1 & \text{if } q=j\text{, and}\\
	0 & \text{otherwise.}
\end{cases}
\end{equation*}

Every $x\in X_n^1$ can be written uniquely as $x=\sum_{j=0}^{n-1} x(j)\delta_j$. If we write $S\colon X_n^1\to X_n^1$ for the cyclic shift operator defined by $S(x)(q)=x(q-1)$, then we can express the cyclic shift equivariance as $T\circ S=S\circ T$. On basis elements, we have $S(\delta_j)=\delta_{j+1}$ so $\delta_j=S^j(\delta_0)$ for all $j\in\Z_n$. Thus, we can rewrite $x=\sum_{j=0}^{n-1}x(j)S^j(\delta_0)$ and since $T$ is linear and commutes with $S$, we have that 
\begin{align*}
	T(x)=T\left(\sum_{j=0}^{n-1}x(j)S^j(\delta_0)\right) &= \sum_{j=0}^{n-1}x(j)T(S^j(\delta_0)) = \sum_{j=0}^{n-1}x(j)S^j(T(\delta_0))\\
\end{align*}
and hence $T(x)(q)=\sum_{j=0}^{n-1}x(j)y(q-j)$ where $y=T(\delta_0)\in X_n^1$. This is precisely the circular convolution of $x$ and $y$.
\end{proof}

\convequivariant*
\begin{proof}
For $(l,w)\in G_n$, $q\in\Z_n$ and $\phi\in\Phi$, we have 
\begin{align*}
	((l,w)\bigcdot\conv_{\phi}(x))(q) &= w\sum_{j=1}^k(\phi_j\ast x_j)(q-l) = \sum_{j=1}^k w(\phi_j\ast x_j)(q-l) \\
					  &= \sum_{j=1}^k(\phi_j\ast ((l,w)\bigcdot x_j))(q)= \conv_{\phi}((l,w)\bigcdot x)(q)
\end{align*}
for any $x\in X_n^k$. Since $\cconv_\Phi$ and the group action are defined coordinate-wise, we are done.
\end{proof}

\nonlinearequivariant*
\begin{proof}
If $a(z) = g(\vert z\vert)z$, then $a(wz)=g(\vert wz\vert)wz=wg(|z|)z=wa(z)$ since $|w|=1$. Now, assume that $a$ satisfies the equivariance condition. We have $a(0)=wa(0)$ for all $w\in S^1$, so $a(0)$ must be zero. If $z\neq 0$, write $z=rw$ with $r>0$ and $w\in S^1$, and observe that $a(z) = a(rw) = wa(r) =  zr^{-1}a(r).$ Define the function $g$ by letting $g(0)=0$ and $g(r)=r^{-1}a(r)$ whenever $r\neq0$.
\end{proof}

\subsection{Training Details}

\begin{table}[H]
	\centering
    \caption{Hyperparameters for the training of the models. All models were trained using the Adam optimizer with the specified learning rate. Hyperparameter values were chosen based on performance on validation data. The number of (real) parameters is an estimate of the model complexity. Random rotations were applied to improve generalization for the non-equivariant and non-invariant models.} 
    \label{tab:training_details}
	\tabcolsep=0.11cm
	\resizebox{\textwidth}{!}{
	\begin{tabular}{lllccccc}
	\toprule
	\textbf{Model} & \textbf{Dataset} & \textbf{LR} & \textbf{Batch Size} & \textbf{Epochs} & \textbf{Parameters} & \textbf{Data Aug.} \\
	\midrule
	CNN (2D) & \texttt{FashionMNIST} & $0.01$ & $128$ & $200$ & $294\,666$ & Yes \\
	GCN & \texttt{FashionMNIST} & $0.001$ & $128$ & $200$ & $74\,826$ & Yes \\
    ContourCNN & \texttt{FashionMNIST} & $0.001$ & $128$ & $200$ & $42\,874$ & Yes \\
	RotaTouille & \texttt{FashionMNIST} & $0.0005$ & $128$ & $200$ & $65\,089$ & No \\
	CNN (2D) & \texttt{ModelNet} & $0.01$ & $16$ & $200$ & $1\,143\,012$ & Yes \\
	GCN & \texttt{ModelNet} & $0.001$ & $16$ & $200$ & $75\,594$ & Yes \\
    ContourCNN & \texttt{ModelNet} & $0.001$ & $16$ & $200$ & $42\,964$ & Yes \\
	RotaTouille & \texttt{ModelNet} & $0.0005$ & $16$ & $200$ & $64\,747$ & No \\
    GCN & \texttt{RotatedMNIST} & $0.001$ & $128$ & $200$ & $74\,826$ & Yes \\
    ContourCNN & \texttt{RotatedMNIST} & $0.001$ & $128$ & $200$ & $42\,874$ & Yes \\
	RotaTouille & \texttt{RotatedMNIST} & $0.0005$ & $128$ & $200$ & $65\,089$ & No \\
	RotaTouille + RH & \texttt{RotatedMNIST} & $0.0005$ & $128$ & $200$ & $66\,881$ & No \\
	CNN (2D) &  \texttt{PCST} & $0.001$ & $32$ & $200$ & $5\,315$ & Yes \\
	RotaTouille &  \texttt{PCST} & $0.001$ & $32$ & $200$ & $4\,378$ & No \\
	CNN (1D) &  \texttt{Curvature} & $0.001$ & $32$ & $100$ & $34\,033$ & Yes \\
    GCN &  \texttt{Curvature} & $0.001$ & $32$ & $100$ & $41\,473$ & Yes \\
	RotaTouille &  \texttt{Curvature} & $0.001$ & $32$ & $100$ & $27\,269$ & No \\
	\bottomrule
\end{tabular}
}
\end{table}

\subsection{Model Details}\label{sec:model_details}

Here, we provide detailed descriptions of the model architectures used in the experiments.

\subsubsection{RotaTouille Models}

\paragraph{Classification model} The \emph{ConvBlock} used in the classifier consists of a sequence of operations that together form the basic building block of the model. The structure can be summarized schematically as
\[
\text{ConvLayer} \to \text{ModReLU} \to \text{BatchNorm} \to \text{(Coarsening)} \to \text{Global pooling}.
\]
The block begins with a \emph{ConvLayer}, which performs a complex-valued circular convolution. This is followed by the equivariant activation function \emph{ModReLU}. Next, \emph{BatchNorm} is applied, operating only on the magnitudes (absolute values) of the complex activations to stabilize training. If the coarsening factor $p>1$, a \emph{Coarsening} step is applied, performing local pooling through a learnable combination of the mean and absolute-value maximum. The resulting contour is then forwarded to the next ConvBlock. Finally, a \emph{Global pooling} layer produces real-valued invariant features using a learnable combination of the mean and absolute-value maximum, which are subsequently concatenated and used by the classification head. The full classifier architecture is listed in \cref{tab:classifier_architecture}.

\begin{table}[ht]
\caption{RotaTouille classifier architecture used for the \texttt{FashionMNIST}, \texttt{ModelNet} and \texttt{RotatedMNIST} datasets. For the \texttt{ModelNet} dataset, the number of input channels in the first layer is $4$ instead of $1$.}
\label{tab:classifier_architecture}
\small
\centering
\begin{tabular}{llllp{6.5cm}}
\toprule
\textbf{Layer} & \textbf{Input} $\to$ \textbf{Output} & \textbf{Kernel} & \textbf{$p$} & \textbf{Notes} \\
\midrule
\multicolumn{5}{c}{\textbf{Feature extractor}} \\
\midrule
ConvBlock & 1 $\to$ 8 & 9 & 1 & \\
ConvBlock & 8 $\to$ 8 & 9 & 2 & \\
ConvBlock & 8 $\to$ 16 & 9 & 1 & \\
ConvBlock & 16 $\to$ 16 & 9 & 2 & \\
ConvBlock & 16 $\to$ 35 & 9 & 1 & \\
ConvBlock & 35 $\to$ 35 & 9 & 1 & \\
ConvBlock & 35 $\to$ 10 & 9 & 1 & \\
\midrule
\multicolumn{5}{c}{\textbf{Classifier head}} \\
\midrule
Linear & $m_1+m_2$ $\to$ 128 & - & - & Followed by BatchNorm, Dropout (0.5) and ReLU. The number of invariant features from the global pooling layers is denoted by $m_1$, and $m_2$ is any additional invariant features such as radial pixel intensity histograms.\\
Linear & 128 $\to n$ & - & - & Where $n$ is the number of classes.\\
\bottomrule
\end{tabular}
\end{table}

\paragraph{Contour autoencoder model} The contour autoencoder used with the \texttt{PCST} dataset consists of an encoder and a decoder part. The \emph{EncoderBlock} used in the autoencoder consists of a sequence of operations that progressively transform and coarsen the input representation. The structure can be summarized schematically as
\[
\bigl(\text{ConvLayer} \to \text{BatchNorm} \to \text{Amplitude-Phase}\bigr) \times 2 \to \text{Coarsening}.
\]
Each block consists of a \emph{ConvLayer} performing complex-valued circular convolution, followed by \emph{BatchNorm} on magnitudes and an \emph{Amplitude-Phase} activation function as an equivariant nonlinearity. This sequence is repeated twice. Finally, \emph{Coarsening} is applied using strided convolution with stride $p$ and kernel size $p$. If the number of input and output channels match, a skip connection adds the input contour to the output. The \emph{DecoderBlock} is identical to the encoder blocks, except the order is reversed, batch normalization is omitted and coarsening is replaced with strided transposed convolutions for unpooling. The full autoencoder architecture is described in \cref{tab:full_shape_autoencoder}.

\begin{table}[ht]
\caption{RotaTouille contour autoencoder architecture.}
\label{tab:full_shape_autoencoder}
\small
\centering
\begin{tabular}{llll}
\toprule
\textbf{Layer} & \textbf{Input} $\to$ \textbf{Output} & \textbf{Kernel} & \textbf{$p$} \\
\midrule
\multicolumn{4}{c}{\textbf{Encoder}} \\
\midrule
EncoderBlock & 1 $\to$ 4 & 11 & 2 \\
EncoderBlock & 4 $\to$ 4 & 9 & 2 \\
EncoderBlock & 4 $\to$ 4 & 7 & 2 \\
EncoderBlock & 4 $\to$ 4 & 5 & 2 \\
EncoderBlock & 4 $\to$ 4 & 3 & 2 \\
\midrule
\multicolumn{4}{c}{\textbf{Decoder}} \\
\midrule
DecoderBlock & 4 $\to$ 4 & 3 & 2 \\
DecoderBlock & 4 $\to$ 4 & 5 & 2 \\
DecoderBlock & 4 $\to$ 4 & 7 & 2 \\
DecoderBlock & 4 $\to$ 4 & 9 & 2 \\
DecoderBlock & 4 $\to$ 1 & 11 & 2 \\
\bottomrule
\end{tabular}
\end{table}

\paragraph{Regression model} For the node-level regression model, we do not include any local pooling layers since we are predicting one curvature value per point in the input contour. We take absolute values before the regression head to obtain rotation invariant features, but still maintaining cyclic shift equivariance. The regression model used with the \texttt{Curvature} dataset is described in \cref{tab:regression_architecture}.

\begin{table}[ht]
\caption{RotaTouille regression model architecture.}
\label{tab:regression_architecture}
\small
\centering
\begin{tabular}{lllp{6.5cm}}
\toprule
\textbf{Layer} & \textbf{Input} $\to$ \textbf{Output} & \textbf{Kernel} & \textbf{Notes} \\
\midrule
\multicolumn{4}{c}{\textbf{Equivariant feature extractor (complex-valued)}} \\
\midrule
ConvLayer & 1 $\to$ 8 & 5 & Followed by ModReLU and BatchNorm. \\
ConvLayer & 8 $\to$ 16 & 5 & Followed by ModReLU and BatchNorm. \\
ConvLayer & 16 $\to$ 32 & 5 & Followed by ModReLU and BatchNorm. \\
ConvLayer & 32 $\to$ 64 & 5 & Followed by ModReLU and BatchNorm. \\
\midrule
\multicolumn{4}{c}{\textbf{Regression head (real-valued)}} \\
\midrule
Linear & 64 $\to$ 1 & - & Element-wise absolute values as input.\\
\bottomrule
\end{tabular}
\end{table}

\FloatBarrier

\subsubsection{Baseline Models}

\paragraph{2D CNN classifier for images}  
The baseline 2D CNN classifier uses 2D convolutions with batch normalization, ReLU activations, and max pooling, followed by a fully connected classifier. The architecture is summarized in \cref{tab:cnn2d}.

\begin{table}[ht]
\caption{Baseline 2D CNN classifier for images. For the \texttt{ModelNet} dataset, the number of input channels in the first layer is $4$ instead of $1$.}
\label{tab:cnn2d}
\small
\centering
\begin{tabular}{llllp{6.5cm}}
\toprule
\textbf{Layer} & \textbf{Input} $\to$ \textbf{Output} & \textbf{Kernel} & \textbf{Pool} & \textbf{Notes} \\
\midrule
\multicolumn{5}{c}{\textbf{Feature extractor}} \\
\midrule
Conv2d & 1 $\to$ 32 & 3 & - & Followed by BatchNorm and ReLU. \\
Conv2d & 32 $\to$ 64 & 3 & - & Followed by BatchNorm and ReLU. \\
MaxPool2d & - & - &2 & \\
Conv2d & 64 $\to$ 64 & 3 & - &  Followed by BatchNorm and ReLU. \\
Conv2d & 64 $\to$ 64 & 3 & - & Followed by BatchNorm and ReLU. \\
MaxPool2d & - & - & 2 &  \\
\midrule
\multicolumn{5}{c}{\textbf{Classifier head}} \\
\midrule
Linear & $m$ $\to$ 64 & - & - & Followed by BatchNorm, Dropout (0.5) and ReLU. The embedding size $m$ depends on the input size.\\
Linear & 64 $\to$ $n$ & - & - & Where $n$ is the number of classes.\\
\bottomrule
\end{tabular}
\end{table}

\paragraph{2D CNN autoencoder} The 2D CNN autoencoder is composed of convolutional downsampling blocks, called \emph{ConvBlock2d}, which consist of two consecutive sequences of convolution, batch normalization, and ReLU activation, followed by a downsampling operation. The upsampling counterpart, \emph{DeconvBlock2d}, consists of an upsampling operation followed by two consecutive sequences of ReLU activation and convolution. We use strided (transposed) convolutions for the upsampling and downsampling operations. The full architecture is summarized in \cref{tab:autoencoder2d}.

\begin{table}[h]
\caption{Baseline 2D CNN autoencoder.}
\label{tab:autoencoder2d}
\small
\centering
\begin{tabular}{llll}
\toprule
\textbf{Layer} & \textbf{Input} $\to$ \textbf{Output} & \textbf{Kernel} & \textbf{Pool / Upsample}\\
\midrule
\multicolumn{4}{c}{\textbf{Encoder}} \\
\midrule
ConvBlock2d & 1 $\to$ 4 & 3 & 4\\
ConvBlock2d & 4 $\to$ 4 & 3 & 4 \\
ConvBlock2d & 4 $\to$ 8 & 3 & 4 \\
\midrule
\multicolumn{4}{c}{\textbf{Decoder}} \\
\midrule
DeconvBlock2d & 8 $\to$ 4 & 3 & 4 \\
DeconvBlock2d & 4 $\to$ 4 & 3 & 4 \\
DeconvBlock2d & 4 $\to$ 1 & 3 & 4 \\
\bottomrule
\end{tabular}
\end{table}

\paragraph{Graph convolutional network (GCN)} The GCN classification and regression models are both based on GCN layers \cite{kipf2016semi} for cyclic graphs with self-loops. The full architecture is summarized in \cref{tab:cyclegcn_summary}. For regression on the \texttt{Curvature} dataset, we use $n_{\text{layers}} = 3$ and $n_{\text{layers}} = 5$ for classification tasks.

\begin{table}[ht]
\caption{Baseline GCN architectures for contour classification and node-level regression. For the \texttt{ModelNet} dataset, the number of input channels in the first layer is $8$ instead of $2$.}
\label{tab:cyclegcn_summary}
\small
\centering
\begin{tabular}{llp{6.5cm}}
\toprule
\textbf{Layer} & \textbf{Input} $\to$ \textbf{Output} & \textbf{Notes} \\
\midrule
\multicolumn{3}{c}{\textbf{Feature extractor}} \\
\midrule
GCNLayer & $2 \to 128$ & Repeated $n_{\text{layers}}$ times with ReLU activation.\\
\midrule
\multicolumn{3}{c}{\textbf{Classifier head}} \\
\midrule
GlobalMeanPool & $128 \to 128$ & Global average pooling across contour points. \\
Linear & $128 \to 64$ & Followed by BatchNorm, Dropout (0.5) and ReLU.\\
Linear & $64 \to n$ & Where $n$ is the number of classes. \\
\midrule
\multicolumn{3}{c}{\textbf{Regression head}} \\
\midrule
Linear & $128 \to 64$ & Followed by BatchNorm, Dropout (0.5) and ReLU.\\
Linear & $64 \to 1$ & \\
\bottomrule
\end{tabular}
\end{table}

\paragraph{ContourCNN and 1D CNN} The ContourCNN is used for contour classification, while the 1D CNN is used for node-level regression. Each \emph{CircConvBlock} in the ContourCNN consists of a circular 1D convolution, batch normalization, ReLU activation, and priority pooling as in \cite{droby2021contourcnn}. Both networks operate on real-valued contour coordinates.

\begin{table}[ht]
\caption{Baseline ContourCNN classifier architecture used for contour classification. For the \texttt{ModelNet} dataset, the number of input channels in the first layer is $8$ instead of $2$.}
\label{tab:contourcnn_architecture}
\small
\centering
\begin{tabular}{lllp{6.5cm}}
\toprule
\textbf{Layer} & \textbf{Input} $\to$ \textbf{Output} & \textbf{Kernel} & \textbf{Notes} \\
\midrule
\multicolumn{4}{c}{\textbf{Feature extractor}} \\
\midrule
CircConvBlock & 2 $\to$ 32 & 3 & Priority pooling to $l=40$. \\
CircConvBlock & 32 $\to$ 64 & 3 & Priority pooling to $l=30$. \\
CircConvBlock & 64 $\to$ 128 & 3 & Priority pooling to $l=20$. \\
\midrule
\multicolumn{4}{c}{\textbf{Classifier head}} \\
\midrule
GlobalMeanPool & 128 $\to$ 128 & - & Global average pooling across contour points. \\
Linear & 128 $\to$ 80 & - & Followed by BatchNorm, Dropout (0.5) and ReLU.\\
Linear & 80 $\to n$ & - & Where $n$ is the number of classes. \\
\bottomrule
\end{tabular}
\end{table}

\begin{table}[ht]
\caption{Baseline 1D CNN regressor architecture used for node-level regression.}
\label{tab:cnn1d_regression_architecture}
\small
\centering
\begin{tabular}{lllp{6.5cm}}
\toprule
\textbf{Layer} & \textbf{Input} $\to$ \textbf{Output} & \textbf{Kernel} & \textbf{Notes} \\
\midrule
\multicolumn{4}{c}{\textbf{Feature extractor}} \\
\midrule
Conv1d & 2 $\to$ 16 & 5 & Followed by BatchNorm and ReLU.\\
Conv1d & 16 $\to$ 32 & 5 & Followed by BatchNorm and ReLU.\\
Conv1d & 32 $\to$ 64 & 5 & Followed by BatchNorm and ReLU.\\
Conv1d & 64 $\to$ 64 & 5 & Followed by BatchNorm and ReLU.\\
\midrule
\multicolumn{4}{c}{\textbf{Regression head}} \\
\midrule
Linear & 64 $\to$ 1 & - & \\
\bottomrule
\end{tabular}
\end{table}

\FloatBarrier

\subsection{Ablation Study Results}\label{sec:ablation_results}

Here, we list all numerical results for the ablation study in \cref{sec:ablation_study}.

\begin{table}[ht]
    \small
	\centering
	\caption{Comparison of test accuracies on the \texttt{FashionMNIST} dataset for different choices of \textbf{local pooling (coarsening) functions} and aggregation functions. Note that in some training runs, coset pooling led to numerical instabilities (NaN losses), preventing convergence. For consistency, we report mean accuracy scores only over the successfully converged.}
	\label{tab:ablation_coarsening_functions}
	\begin{tabular}{lll}
		\toprule
		\textbf{Pooling type} & \textbf{Aggregation} & \textbf{Accuracy} \\
		\midrule
        No Pooling & -- & $0.869\pm0.015$\\
        Strided & Mean & $0.868\pm0.002$\\
        Coset & Mean & $0.827\pm0.025$\\

        Strided & Max & $0.865\pm0.003$\\
        Coset & Max & $0.842\pm0.003$\\

        Strided & Combined & $0.867\pm0.002$\\
        Coset & Combined & $0.841\pm0.001$\\
		\bottomrule
	\end{tabular}
\end{table}%
\begin{figure}[ht]
  \centering
  \includesvg[width=0.6\textwidth,inkscapelatex=true]{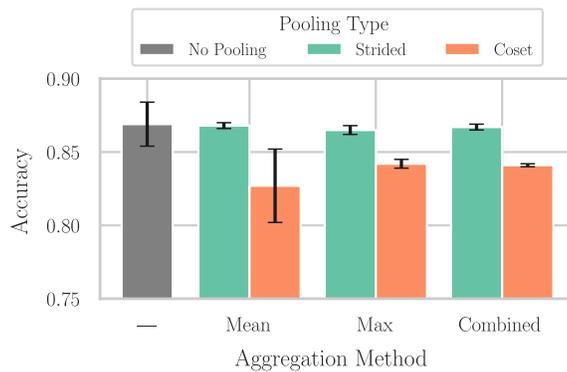}
  \caption{A bar plot visualizing the numerical results in \cref{tab:ablation_coarsening_functions}.}
  \label{fig:ablation_coarsening_plot}
\end{figure}%
\begin{table}[ht]
    \small
	\centering
	\caption{Comparison of test accuracies on the \texttt{FashionMNIST} dataset for different choices of \textbf{non-linear activation functions}. See \cref{tab:equivariant_activations} for definitions of the activation functions used.}
	\label{tab:ablation_activation_functions}
	\begin{tabular}{ll}
		\toprule
		\textbf{Activation function} & \textbf{Accuracy} \\
		\midrule
        Siglog & $0.868\pm0.002$\\
        ModReLU & $0.867\pm0.002$\\
        Amplitude-Phase ($\tanh$) & $0.869\pm0.001$\\
		\bottomrule
	\end{tabular}
\end{table}%
\begin{table}[ht]
    \small
	\centering
	\caption{Comparison of test accuracies on the \texttt{FashionMNIST} dataset for different choices of the \textbf{number of sampled contour points}.}
	\label{tab:ablation_contour_size}
	\begin{tabular}{ll}
		\toprule
		\textbf{Contour points} & \textbf{Accuracy} \\
		\midrule
        16 & $0.855\pm0.003$\\
        32 & $0.866\pm0.003$\\
        64 & $0.873\pm0.002$\\
        128 & $0.877\pm0.002$\\
        256 & $0.878\pm0.002$\\
		\bottomrule
	\end{tabular}
\end{table}%
\begin{table}[ht]
    \small
	\centering
	\caption{Comparison of test accuracies on the \texttt{FashionMNIST} dataset for different choices \textbf{kernel sizes}.}
	\label{tab:ablation_kernel_sizes}
	\begin{tabular}{lll}
		\toprule
		\textbf{Kernel size} & \textbf{Parameters} & \textbf{Accuracy} \\
		\midrule
        3 & $33\,997$ & $0.870\pm0.003$\\
        5 & $44\,361$ & $0.875\pm0.002$\\
        7 & $54\,725$ & $0.876\pm0.002$\\
        9 & $65\,089$ & $0.876\pm0.002$\\
        11 & $75\,453$ & $0.875\pm0.002$\\
        13 & $85\,817$ & $0.876\pm0.002$\\
		\bottomrule
	\end{tabular}
\end{table}%
\begin{table}[ht]
    \small
	\centering
	\caption{Comparison of test accuracies on the \texttt{FashionMNIST} dataset for different choices of the aggregation function used on the contour points' absolute values in the \textbf{global pooling} layer for producing invariant representaion.}
	\label{tab:ablation_global_pool}
	\begin{tabular}{ll}
		\toprule
		\textbf{Aggregation function} & \textbf{Accuracy} \\
		\midrule
        Mean & $0.864\pm0.003$\\
        Max & $0.869\pm0.002$\\
        Combined & $0.867\pm0.002$\\
		\bottomrule
	\end{tabular}
\end{table}

\clearpage

\end{document}